\newcommand{\myparagraph}[1]{\textbf{#1}\hspace{0.3em}}
\definecolor{mygreen}{RGB}{35,110,25}
\newcommand{\mycheckmark}{\ding{51}}
\newcommand{\myxmark}{\ding{55}}
\begin{document}
\title{HeNCler: Node Clustering in Heterophilous Graphs via Learned Asymmetric Similarity}
\titlerunning{Node Clustering in Heterophilous Graphs via Learned Asymmetric Similarity}
\author{
Sonny Achten\inst{1}
\and
Zander Op de Beeck\inst{1}
\and
Francesco Tonin\inst{2}
\and
Volkan Cevher\inst{2}
\and
Johan A. K. Suykens\inst{1}\thanks{This is the submitted version of the manuscript accepted at the International Conference on Artificial Neural Networks (ICANN) 2025, Special Session on Neural Networks for Graphs and Beyond. The final Version of Record will appear in Springer LNCS.}
}
\authorrunning{S. Achten et al.}
\institute{
  ESAT-STADIUS, KU Leuven, Leuven, Belgium
  \email{\{sonny.achten,zander.opdebeeck,johan.suykens\}@kuleuven.be}
  \and
  LIONS, EPFL, Lausanne, Switzerland \\
  \email{\{francesco.tonin,volkan.cevher\}@epfl.ch}
  }
\maketitle              
\begin{abstract} 
Clustering nodes in heterophilous graphs is challenging as traditional methods assume that effective clustering is characterized by high intra-cluster and low inter-cluster connectivity. To address this, we introduce HeNCler—a novel approach for \underline{\textbf{He}}terophilous \underline{\textbf{N}}ode \underline{\textbf{Cl}}ust\underline{\textbf{er}}-ing. 
HeNCler  \emph{learns} a similarity graph by optimizing a clustering-specific objective based on weighted kernel singular value decomposition.
Our approach enables spectral clustering on an \emph{asymmetric} similarity graph, providing flexibility for both directed and undirected graphs. By solving the primal problem directly, our method overcomes the computational difficulties of traditional adjacency partitioning-based approaches. Experimental results show that HeNCler significantly improves node clustering performance in heterophilous graph settings, highlighting the advantage of its asymmetric graph-learning framework. 

\keywords{Heterophily, Clustering, Kernel SVD}
\end{abstract}
\section{Introduction}
{Graph} neural networks (GNNs) have substantially advanced machine learning applications to graph-structured data by effectively propagating node attributes end-to-end. Typically, GNNs rely on the assumption of homophily, where nodes with similar labels are more likely to be connected \cite{zheng2024graph}.
The homophily assumption holds in contexts such as social networks and citation graphs. It underlies the effectiveness of message-passing GNNs like GCN \cite{Kipf:2017tc} in tasks such as node classification and graph prediction. In homophilous graphs, local feature aggregation reinforces meaningful representations through neighborhood smoothing.

In heterophilous graphs, such as web or transaction networks, connected nodes often differ in labels, making local aggregation less effective. This requires capturing long-range dependencies by learning edge importance \cite{Dwivedi_graph_trand, bi2024}, which is more feasible in supervised tasks due to label guidance.

Our work specifically targets attributed node clustering in a fully unsupervised setting, where neither labels nor external guidance are available---necessitating the use of unsupervised or self-supervised learning strategies.
For instance, auto-encoder type models \cite{Pan_2020} are primarily focused on node representation learning rather than clustering, making them less suited for directly improving cluster-ability. 
Various self-supervised, contrastive learning techniques \cite{Hassani2020ContrastiveMR, GraphCL_You_2020} enhance node representation learning in homophilous settings only.
At the same time, other self-supervised methods have been developed to handle heterophilous graphs \cite{chen2022towards, DSSL_Xiao_2022}. 
However, these methods are designed for the general node representation learning task and lack a clustering objective.

Existing node clustering methods \cite{bianchi_spectral_2020, tsitsulin_graph_2023, devvrit_s3gc_2022} often rely on proximity-based objectives or partitioning of the adjacency matrix, assuming that effective clustering aligns with high intra-cluster and low inter-cluster similarity---a premise often invalid in heterophilous graphs. Moreover, these methods are typically limited to undirected graphs, disregarding valuable asymmetric information.

This paper introduces HeNCler, a novel approach for node clustering in heterophilous graphs, illustrated in Figure \ref{fig:graph_learning}.
Existing work overlooks the asymmetric relationships in heterophilous graphs.
HeNCler addresses this by using weighted kernel singular value decomposition (wKSVD) to induce a learned asymmetric similarity graph for both directed and undirected graphs. 
The dual problem of wKSVD aligns with asymmetric kernel spectral clustering, enabling the interpretation of similarities without homophily. 
By solving the primal problem directly, HeNCler overcomes computational difficulties and shows superior performance in node clustering tasks within heterophilous graphs.

\myparagraph{Our contributions} in this work can be summarized as follows:
\begin{itemize}
\item We introduce HeNCler, a kernel spectral biclustering framework designed to \textit{learn} an induced \textit{asymmetric} similarity graph suited for node clustering of heterophilous graphs, applicable to both directed and undirected graphs.
\item We develop a primal-dual framework for a generic weighted kernel singular value decomposition (wKSVD) model.
\item We show that the dual wKSVD formulation allows for biclustering of bipartite/asymmetric graphs, while we employ a computationally feasible implementation in the primal wKSVD formulation.
\item We further generalize our approach with trainable feature mappings, using node and edge decoders, such that the similarity matrix to cluster is learned.
\item We train HeNCler in the primal setting and demonstrate its superior performance on the node clustering task for heterophilous attributed graphs. 
\item Our implementation and supplementary materials are available on GitHub: \url{https://github.com/sonnyachten/HeNCler/}.
\end{itemize}
\begin{figure}
    \centering
    \resizebox{0.85\linewidth}{!}{
    \begin{tikzpicture}[scale=0.6]
        \node[draw, circle, fill=blue!50, minimum size=0.37cm, inner sep=0pt] (a) at (0, 0.5) {\scriptsize{$u$}};
        \node[draw, circle, fill=blue!50, minimum size=0.37cm, inner sep=0pt] (d) at (1, -0.5) {\scriptsize{$v$}};
        \node[draw, circle, fill=yellow!50, minimum size=0.37cm] (b) at (1, 0.5) {};
        \node[draw, circle, fill=yellow!50, minimum size=0.37cm] (c) at (0, -0.5) {};
        \node[draw, circle, fill=yellow!50, minimum size=0.37cm] (e) at (2, -0.5) {};
        \draw[->] (a) -- (b);
        \draw[->] (c) -- (a);
        \draw[->] (d) -- (b);
        \draw[->] (d) -- (c);
        \draw[->] (e) -- (d);

        \draw[->, line width = 0.3mm] (2.8, 0) -- (3.6, 0);
        \draw[->, line width = 0.3mm] (7.3, 0) -- (8.1, 0); 

        \node[draw, circle, fill=blue!50, minimum size=0.37cm, inner sep=0pt] (f) at (4.5, 2) {\scriptsize{$u$}};
        \node[draw, circle, fill=blue!50, minimum size=0.37cm, inner sep=0pt] (g) at (4.5, 1) {\scriptsize{$v$}};
        \node[draw, circle, fill=yellow!50, minimum size=0.37cm] (h) at (4.5, 0) {};
        \node[draw, circle, fill=yellow!50, minimum size=0.37cm] (i) at (4.5, -1) {};
        \node[draw, circle, fill=yellow!50, minimum size=0.37cm] (j) at (4.5, -2) {};

        \node[draw, circle, fill=blue!50, minimum size=0.37cm, inner sep=0pt] (k) at (6.5, 2) {\scriptsize{$u$}};
        \node[draw, circle, fill=blue!50, minimum size=0.37cm, inner sep=0pt] (l) at (6.5, 1) {\scriptsize{$v$}};
        \node[draw, circle, fill=yellow!50, minimum size=0.37cm] (m) at (6.5, 0) {};
        \node[draw, circle, fill=yellow!50, minimum size=0.37cm] (n) at (6.5, -1) {};
        \node[draw, circle, fill=yellow!50, minimum size=0.37cm] (o) at (6.5, -2) {};

        \draw (f) -- (k);
        \draw (g) -- (l);
        \draw[red, line width = 0.5mm] (f) -- (l);
        \draw[blue, line width = 0.5mm] (g) -- (k);

        \draw (h) -- (m);
        \draw (h) -- (n);
        \draw (h) -- (o);
        \draw (i) -- (m);
        \draw (i) -- (n);
        \draw (i) -- (o);        
        \draw (j) -- (m);
        \draw (j) -- (n);
        \draw (j) -- (o);

        \node at (5.5, -2.6) {\scriptsize{$\mathcal{S}=(\bm{\Phi},\bm{\Psi},\bm{S})$}};
        \node at (4.5, 2.7) {\scriptsize{$\phi(\bm{x})$}};
        \node at (6.5, 2.7) {\scriptsize{$\psi(\bm{x})$}};
        \node at (3.35, 1.52) {\tiny{\textcolor{red}{$\text{sim}(u, v)$}$\neq$\textcolor{blue}{$\text{sim}(v, u)$}}};

        \node[draw, circle, fill=blue!50, minimum size=0.37cm] (p) at (9, 2) {};
        \node[draw, circle, fill=blue!50, minimum size=0.37cm] (q) at (9, 1) {};
        \node[draw, circle, fill=yellow!50, minimum size=0.37cm] (r) at (9, 0) {};
        \node[draw, circle, fill=yellow!50, minimum size=0.37cm] (s) at (9, -1) {};
        \node[draw, circle, fill=yellow!50, minimum size=0.37cm] (t) at (9, -2) {};

        \node[draw, circle, fill=blue!50, minimum size=0.37cm] (u) at (11, 2) {};
        \node[draw, circle, fill=blue!50, minimum size=0.37cm] (v) at (11, 1) {};
        \node[draw, circle, fill=yellow!50, minimum size=0.37cm] (w) at (11, 0) {};
        \node[draw, circle, fill=yellow!50, minimum size=0.37cm] (x) at (11, -1) {};
        \node[draw, circle, fill=yellow!50, minimum size=0.37cm] (y) at (11, -2) {};

        \draw (p) -- (u);
        \draw (p) -- (v);
        \draw (q) -- (u);
        \draw (q) -- (v);

        \draw (r) -- (w);
        \draw (r) -- (x);
        \draw (r) -- (y);
        \draw (s) -- (w);
        \draw (s) -- (x);
        \draw (s) -- (y);        
        \draw (t) -- (w);
        \draw (t) -- (x);
        \draw (t) -- (y);

        \draw[dashed] (9, 1.5) ellipse (0.55 and 0.95);
        \draw[dashed] (9, -1) ellipse (0.6 and 1.45);
        \draw[dashed] (11, 1.5) ellipse (0.55 and 0.95);
        \draw[dashed] (11, -1) ellipse (0.6 and 1.45);
    \end{tikzpicture}
    }
    \caption{\textbf{HeNCler Overview}. Starting from a heterophilous (directed) graph, where similar nodes are far apart (left), HeNCler learns two sets of node representations, $\{\phi(\bm{x}_v)\}_{v \in \mathcal{V}}$ and $\{\psi(\bm{x}_v)\}_{v \in \mathcal{V}}$, forming a bipartite graph $\mathcal{S}$ (middle), where the similarity between nodes is defined as $ S_{uv} = \text{sim}(u,v)=\phi(\bm{x}_u)^\top\psi(\bm{x}_v)$. 
    The clustering objective brings related nodes closer in the learned graph, and clusters are then identified using spectral biclustering through wKSVD (right).}
    \label{fig:graph_learning}
\end{figure}
\section{Preliminaries and related work}
We use lowercase symbols (e.g., $x$) for scalars, lowercase bold (e.g., $\bm{x}$) for vectors and uppercase bold (e.g., $\bm{X}$) for matrices. A single entry of a matrix is represented by $X_{ij}$. $\phi(\cdot)$ denotes a mapping and $\bm{\phi}_v=\phi(\bm{x}_v)$ represents the mapping of node $v$ in the induced feature space.
We represent a graph $\mathcal{G}$ by its vertices (i.e., nodes) $\mathcal{V}$ and edges $\mathcal{E}$, $\mathcal{G}=(\mathcal{V},\mathcal{E})$, or by its node feature matrix and adjacency matrix $\mathcal{G}=(\bm{X},\bm{A})$. For a bipartite graph, we have $\mathcal{G}=(\mathcal{I},\mathcal{J},\mathcal{E})$ or  $\mathcal{G}=(\bm{X}_\mathcal{I},\bm{X}_\mathcal{J},\bm{S})$ where $S_{ij}$ is the edge weight between nodes $i\in\mathcal{I}$ and $j\in\mathcal{J}$. Note that $\bm{S}$ is generally asymmetric and rectangular, and that the adjacency matrix of the bipartite graph is given by $\bm{A} =$ \begin{small}$\left[\begin{smallmatrix}
			\bm{0} & \bm{S} \\ \bm{S}^{\top} & \bm{0} \end{smallmatrix}\right]$ \end{small}. 

\myparagraph{Kernel singular value decomposition}(KSVD) \cite{suykens_svd_2016} 
allows for non-linear extensions of the SVD problem and can be applied on data structures such as row and column features, directed graphs, and/or can exploit asymmetric similarity information such as conditional probabilities \cite{He2023}. Interestingly, KSVD often outperforms the similar, though symmetric, kernel principal component analysis model on tasks where asymmetry is not immediately apparent \cite{He2023, tao_nonlinear_2023}. 

\myparagraph{Spectral clustering}generalizations have been proposed in many settings. Spectral graph biclustering \cite{dhillon_co-clustering_2001} formulates the spectral clustering problem of a bipartite graph $\mathcal{G}=(\mathcal{I},\mathcal{J},\bm{S})$ and shows the equivalence with the SVD of the normalized matrix $\bm{S}_n = \bm{D}_1^{-1/2} \bm{S} \bm{D}_2^{-1/2}$, where $D_{1,ii}=\sum_j S_{ij}$ and  $D_{2,jj}=\sum_i S_{ij}$. Cluster assignments for nodes $\mathcal{I}$ and nodes $\mathcal{J}$ can be inferred from the left and right singular vectors respectively. Further, kernel spectral clustering (KSC) \cite{KSC} proposes a weighted kernel principal component analysis in which the dual formulation corresponds to the random walks interpretation of the spectral clustering problem. KSC and the aforementioned spectral biclustering formulation lack asymmetry and a primal formulation respectively, which are limitations that our model addresses.   

\myparagraph{Restricted kernel machines}(RKM) \cite{suykens_deep_2017} possess primal and dual model formulations, based on the concept of conjugate feature duality. It is an energy-based framework for (deep) kernel machines.
RKMs encompasses many model classes, including classification, regression, kernel principal component analysis, and KSVD, and allows deep kernel learning and deep kernel learning on graphs \cite{achten_gckm_2024}. One possibility of representing feature maps in RKMs is by means of deep neural networks, e.g., for unsupervised representation learning \cite{strkm}. 

\myparagraph{Homophilous node clustering} methods such as MinCutPool \cite{bianchi_spectral_2020} and DMoN \cite{tsitsulin_graph_2023} introduce unsupervised loss functions in a graph neural network framework. MinCutPool employs a relaxed version of the minimal cut objective applied to the adjacency matrix, while DMoN optimizes the modularity score of clustering assignments with respect to the graph structure. Both methods rely on partitioning the adjacency matrix and inherently assume strong homophily. Moreover, due to their theoretical foundations, these losses are restricted to undirected graphs.
Beyond adjacency partitioning, self-supervised and contrastive learning techniques have also been proposed \cite{GraphCL_You_2020, Hassani2020ContrastiveMR, devvrit_s3gc_2022}. These approaches typically use graph proximity as a supervision signal, again assuming a degree of homophily. For instance, S$^3$GC \cite{devvrit_s3gc_2022} leverages random walk co-occurrences to infer proximity-based similarities.

\myparagraph{Heterophilous node clustering} methods predominantly rely on contrast\-ive or self-supervised techniques that are less dependent on local proximity. 
HoLe \cite{gu2023homophily} addresses unsupervised graph structure learning by iteratively rewiring the graph using intermediate clustering results. This process increases effective homophily, thereby improving clustering performance on moderately heterophilous graphs.
SparseGAD \cite{SparseGAD_Gong_2023} sparsifies graph structures to effectively reduce noise from irrelevant edges and enhance the detection of closely related nodes. In contrast, methods like ours aim to learn an entirely new similarity graph, which is particularly valuable for strongly heterophilous settings.
Other approaches, such as HGRL \cite{chen2022towards}, employ self-supervised learning by using graph augmentation strategies to capture global and higher-order structural patterns. MUSE \cite{MUSE_Yuan_2023} builds semantic and contextual views for contrastive learning, integrating multi-view representations through a learned fusion controller.

While adjacency partitioning-based methods have shown strong theoretical and empirical performance on homophilous graphs, they do not extend naturally to heterophilous or directed settings. Conversely, self-supervised clustering approaches, though flexible, often lack explicit clustering objectives and/or still implicitly depend on homophily. In the following section, we introduce HeNCler, which bridges these gaps.

\section{Method}\label{sec:Method}
\myparagraph{Model motivation}
Our approach employs an RKM auto-encoder framework, which has been shown to be effective in unsupervised representation learning by jointly optimizing feature mappings and projection matrices within a kernel-based setting \cite{strkm}. We propose a weighted KSVD (wKSVD) loss that employs double feature mappings to learn an asymmetric similarity matrix. This \emph{learned} similarity alleviates the homophily assumption by capturing long-range dependencies and enables the modeling of \emph{asymmetric} relationships---making it particularly well-suited for directed graphs. Even in undirected graphs, this asymmetry can capture nuanced relational patterns. In fact, many graphs treated as undirected are inherently directed (e.g., citation networks). Modeling such asymmetries has been shown to be beneficial \cite{He2023 ,tao_nonlinear_2023}.
 Additionally, the wKSVD loss admits a spectral graph biclustering interpretation, offering further theoretical insight. 
 We first introduce the general wKSVD framework, followed by our HeNCler model, which operates in the primal setting and jointly learns the feature mappings in an end-to-end manner.

\subsection{Kernel spectral biclustering with asymmetric similarities}
Consider a dataset with two input sources $\{\bm{x}_i\}_{i=1}^n$ and $\{\bm{z}_j\}_{j=1}^m$, on which we want to define an unsupervised learning task.
To this end, we introduce a weighted kernel singular value decomposition model (wKSVD), starting from the following primal optimization problem, which is a weighted variant of the KSVD formulation:
\begin{small}
\begin{gather}
\underset{\bm{U},\bm{V},\bm{e},\bm{r}}{\min} J \triangleq \text{Tr}(\bm{U}^\top \bm{V}) - \frac{1}{2}\sum_{i=1}^{n} w_{1,i} \bm{e}_i^\top \boldsymbol{\Sigma}^{-1} \bm{e}_i - \frac{1}{2}\sum_{j=1}^{m} w_{2,j} \bm{r}_j^\top \boldsymbol{\Sigma}^{-1} \bm{r}_j \nonumber
    \\
    \text{s.t. }  \{\bm{e}_{i}=\bm{U}^\top\phi(\bm{x}_i), \ \forall i= 1,\dots, n; \quad \bm{r}_{j}=\bm{V}^\top\psi(\bm{z}_j), \forall j= 1,\dots, m\},
    \label{eq:primal_optimization}
\end{gather}
\end{small}
with projection matrices $\bm{U}, \bm{V} \in \mathbb{R}^{d_f\times s}$; strictly positive weighting scalars $w_{1,i}, w_{2,j}$; latent variables $\bm{e}_i, \bm{r}_j \in \mathbb{R}^s$; diagonal and positive definite hyperparameter matrix $\boldsymbol{\Sigma}\in\mathbb{R}^{s \times s}$; and centered feature maps $\phi(\cdot):\mathbb{R}^{d_x}\mapsto\mathbb{R}^{d_f}$ and $\psi(\cdot):\mathbb{R}^{d_z}\mapsto\mathbb{R}^{d_f}$.
Details on centering of the feature maps are provided in the supplementary materials.
The following derivation shows the equivalence with the spectral biclustering problem.
\begin{proposition}\label{prop: wKSVD duality}
The solution to the primal problem (\ref{eq:primal_optimization}) can be obtained by solving the singular value decomposition of
\begin{small}
\begin{equation}
    \bm{W}_1^{1/2} \bm{S} \bm{W}_2^{1/2} = \bm{H}_e \bm{\Sigma} \bm{H}_r^\top, \label{eq: SVD}
\end{equation}
\end{small}
 where $\bm{W}_1$ and $\bm{W}_2$ are diagonal matrices such that $W_{1,ii} = w_{1,i}$ and $W_{2,jj} = w_{2,j}$, $\bm{S}=\boldsymbol{\Phi \Psi}^\top$ is an asymmetric similarity matrix where $S_{ij} = \phi({\bm{x}_i})^\top\psi({\bm{z}_j})$, $\boldsymbol{\Phi} = [\phi({\bm{x}_1}) \dots \phi({\bm{x}_n})]^\top$, $\boldsymbol{\Psi} = [\psi({\bm{z}_1}) \dots \psi({\bm{z}_m})]^\top$, and where $\bm{H}_e = [\bm{h}_{\bm{e}_1} ... \bm{h}_{\bm{e}_n}]^\top$, and $\bm{H}_r = [\bm{h}_{\bm{r}_1} ... \bm{h}_{\bm{r}_m}]^\top$ are the left and right singular vectors respectively; and by applying $\bm{r}_{j}=\boldsymbol{\Sigma}  \bm{h}_{\bm{r}_j} / \sqrt{w_{2,j}}$ and $\bm{e}_{i} = \boldsymbol{\Sigma}  \bm{h}_{\bm{e}_i} / \sqrt{w_{1,i}}$.
 
\end{proposition}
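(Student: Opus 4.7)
The plan is standard Lagrange duality for an equality-constrained problem, with a simple rescaling at the end to put the coupled stationarity system into SVD form. Since the feature-map constraints in \eqref{eq:primal_optimization} are linear in $\mathbf{U}, \mathbf{V}, \mathbf{e}, \mathbf{r}$ and the remaining terms are quadratic in $\mathbf{e}_i, \mathbf{r}_j$ (with $\boldsymbol{\Sigma}\succ 0$) or bilinear ($\text{Tr}(\mathbf{U}^\top\mathbf{V})$), the KKT stationarity conditions characterize the critical points, matching the primal-dual template of \cite{suykens_svd_2016}.

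First I would introduce Lagrange multipliers $\tilde{\mathbf{h}}_{e_i}, \tilde{\mathbf{h}}_{r_j}\in\mathbb{R}^s$ for the $n+m$ feature-map constraints and stack them into $\tilde{\mathbf{H}}_e = [\tilde{\mathbf{h}}_{e_1},\ldots,\tilde{\mathbf{h}}_{e_n}]^\top$ and $\tilde{\mathbf{H}}_r = [\tilde{\mathbf{h}}_{r_1},\ldots,\tilde{\mathbf{h}}_{r_m}]^\top$. Setting $\partial L/\partial\mathbf{U}=0$ and $\partial L/\partial\mathbf{V}=0$ gives $\mathbf{V} = \boldsymbol{\Phi}^\top\tilde{\mathbf{H}}_e$ and $\mathbf{U} = \boldsymbol{\Psi}^\top\tilde{\mathbf{H}}_r$, while $\partial L/\partial\mathbf{e}_i=0$ and $\partial L/\partial\mathbf{r}_j=0$ give $\mathbf{e}_i = \boldsymbol{\Sigma}\tilde{\mathbf{h}}_{e_i}/w_{1,i}$ and $\mathbf{r}_j = \boldsymbol{\Sigma}\tilde{\mathbf{h}}_{r_j}/w_{2,j}$. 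Substituting the $\mathbf{U},\mathbf{V}$ identities into the primal constraints, using the latter two identities to eliminate $\mathbf{e}_i,\mathbf{r}_j$, and invoking $S_{ij}=\phi(\mathbf{x}_i)^\top\psi(\mathbf{z}_j)$ to stack the per-index equations into matrix form yields the coupled system
\begin{equation*}
\tilde{\mathbf{H}}_e\boldsymbol{\Sigma} = \mathbf{W}_1\mathbf{S}\tilde{\mathbf{H}}_r, \qquad \tilde{\mathbf{H}}_r\boldsymbol{\Sigma} = \mathbf{W}_2\mathbf{S}^\top\tilde{\mathbf{H}}_e.
\end{equation*}

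A final diagonal rescaling then closes the argument. I define $\mathbf{H}_e \triangleq \mathbf{W}_1^{-1/2}\tilde{\mathbf{H}}_e$ and $\mathbf{H}_r \triangleq \mathbf{W}_2^{-1/2}\tilde{\mathbf{H}}_r$; left-multiplying each coupled equation by the appropriate $\mathbf{W}_k^{-1/2}$ converts the system into $\mathbf{H}_e\boldsymbol{\Sigma} = \mathbf{W}_1^{1/2}\mathbf{S}\mathbf{W}_2^{1/2}\mathbf{H}_r$ and $\mathbf{H}_r\boldsymbol{\Sigma} = \mathbf{W}_2^{1/2}\mathbf{S}^\top\mathbf{W}_1^{1/2}\mathbf{H}_e$, i.e., precisely the left/right defining relations of the compact SVD $\mathbf{W}_1^{1/2}\mathbf{S}\mathbf{W}_2^{1/2} = \mathbf{H}_e\boldsymbol{\Sigma}\mathbf{H}_r^\top$ restricted to the top $s$ triplets encoded by the diagonal of $\boldsymbol{\Sigma}$. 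Combining the rescaling $\tilde{\mathbf{h}}_{e_i}=\sqrt{w_{1,i}}\,\mathbf{h}_{e_i}$ and $\tilde{\mathbf{h}}_{r_j}=\sqrt{w_{2,j}}\,\mathbf{h}_{r_j}$ with the stationarity identities for $\mathbf{e}_i,\mathbf{r}_j$ produces the recovery formulas $\mathbf{e}_i = \boldsymbol{\Sigma}\mathbf{h}_{e_i}/\sqrt{w_{1,i}}$ and $\mathbf{r}_j = \boldsymbol{\Sigma}\mathbf{h}_{r_j}/\sqrt{w_{2,j}}$ stated in the proposition.

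The main obstacle I anticipate is purely algebraic bookkeeping: tracking on which side of $\mathbf{S}$ each $\mathbf{W}_k$ or $\mathbf{W}_k^{1/2}$ lands, and staying consistent with row-versus-column conventions when passing from per-index KKT equations to matrix form (since $\boldsymbol{\Sigma}$ multiplies from the right while $\mathbf{W}_k$ acts from the left). No deeper subtlety is expected, because the derivation specializes to the unweighted KSVD of \cite{suykens_svd_2016} when $\mathbf{W}_1 = \mathbf{W}_2 = \mathbf{I}$; the positive weights only reshape which matrix gets factorized and induce the $\sqrt{w_{k,\cdot}}$ factors in the recovery map.
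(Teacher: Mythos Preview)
Your argument is correct and reaches the same SVD characterization, but it follows a different route from the paper. You use classical Lagrange multipliers for the equality constraints, derive the coupled system $\tilde{\mathbf{H}}_e\boldsymbol{\Sigma}=\mathbf{W}_1\mathbf{S}\tilde{\mathbf{H}}_r$, $\tilde{\mathbf{H}}_r\boldsymbol{\Sigma}=\mathbf{W}_2\mathbf{S}^\top\tilde{\mathbf{H}}_e$, and then rescale by $\mathbf{W}_k^{-1/2}$ at the end to reveal the SVD. The paper instead introduces the dual variables through a Fenchel--Young inequality (the ``conjugate feature duality'' of the RKM framework \cite{suykens_deep_2017}), with the $\sqrt{w_{1,i}}$ and $\sqrt{w_{2,j}}$ factors already built into the inequality, so that the stationarity conditions of the resulting upper-bound energy $\bar{J}\ge J$ land directly on the symmetrically weighted eigenvalue problem without a post-hoc rescaling. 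Your approach is arguably more elementary and transparent for proving the proposition itself; the paper's approach buys an explicit primal--dual energy $\bar{J}$ that can be minimized jointly in primal and dual variables, which is what the authors later exploit when they train \modelname by gradient descent in the primal. For the narrow purpose of establishing Proposition~\ref{prop: wKSVD duality}, the two derivations are equivalent.
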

\begin{proof}
We now introduce dual variables $\bm{h}_{\bm{e}_i}$ and $\bm{h}_{\bm{r}_j}$ using a specific instance of the Fenchel–Young inequality \cite{conjugate_duality_book}:
\begin{small}
\begin{gather}
        \frac{1}{2}w_{1,i} \ \bm{e}_i^\top\boldsymbol{\Sigma}^{-1}\bm{e}_i+\frac{1}{2}\bm{h}_{\bm{e}_i}^\top\boldsymbol{\Sigma}\bm{h}_{\bm{e}_i}\geq \sqrt{w_{1,i}} \ \bm{e}_i^\top\bm{h}_{\bm{e}_i},\nonumber \\
    \frac{1}{2}w_{2,j} \ \bm{r}_j^\top\boldsymbol{\Sigma}^{-1}\bm{r}_j+\frac{1}{2}\bm{h}_{\bm{r}_j}^\top\boldsymbol{\Sigma}\bm{h}_{\bm{r}_j}\geq \sqrt{w_{2,j}} \ \bm{r}_j^\top\bm{h}_{\bm{r}_j},\label{eq:FY}
\end{gather}
\end{small}
$\forall \bm{e}_i, \bm{r}_j,\bm{h}_{\bm{e}_i}, \bm{h}_{\bm{r}_j}\in\mathbb{R}^s; \ \forall w_{1,i}, w_{2,j} \in \mathbb{R}_{>0};$ and $\forall \boldsymbol{\Sigma} \in \mathbb{R}^{s\times s}_{\succ 0}$.\footnote{Refer to the supplementary materials for verification of these inequalities.}

By substituting the constraints of (\ref{eq:primal_optimization}) and inequalities (\ref{eq:FY}) into the objective function of (\ref{eq:primal_optimization}), we obtain an objective in primal and dual variables as an upper bound on the primal objective $\bar{J}\geq J$:
\begin{small}
\begin{multline}
\min_{\bm{U},\bm{V},\bm{h}_e,\bm{h}_r}\bar{J} \triangleq \text{Tr}(\bm{U}^\top \bm{V})
    -\sum_{i=1}^{n} \sqrt{w_{1,i}}\ \phi(\bm{x}_i)^\top  \bm{U} \bm{h}_{\bm{e}_i} +\frac{1}{2}\sum_{i=1}^{n}\bm{h}_{\bm{e}_i}^\top \boldsymbol{\Sigma}\bm{h}_{\bm{e}_i}\\
   -\sum_{j=1}^{m} \sqrt{w_{2,j}}\ \psi(\bm{z}_j)^\top  \bm{V} \bm{h}_{\bm{r}_i} +\frac{1}{2}\sum_{j=1}^{m}\bm{h}_{\bm{r}_j}^\top \boldsymbol{\Sigma}\bm{h}_{\bm{r}_j}.\label{eq:AKSC_energy}
\end{multline}\end{small}
Next, we formulate the stationarity conditions of problem (\ref{eq:AKSC_energy}):
\begin{small}\begin{equation}
    \begin{array}{ccc}
        \dfrac{\partial \bar{J}}{\partial \bm{V}}=0 \Rightarrow \ \bm{U}=\sum_{j=1}^m \sqrt{w_{2,j}}\ \psi(\bm{z}_{j})\bm{h}_{\bm{r}_j}^\top,
        & 
        \quad
        &
        \dfrac{\partial \bar{J}}{\partial \bm{h}_{\bm{e}_i}}=0 \Rightarrow \ 
        \boldsymbol{\Sigma}  \bm{h}_{\bm{e}_i} = \sqrt{w_{1,i}}\ \bm{U}^\top\phi(\bm{x}_{i}),\\
        &&\\
        \dfrac{\partial \bar{J}}{\partial \bm{U}}=0 \Rightarrow \ \bm{V}=\sum_{i=1}^n \sqrt{w_{1,i}}\ \phi(\bm{x}_{i})\bm{h}_{\bm{e}_i}^\top,
        &
        \quad
        & 
        \dfrac{\partial \bar{J}}{\partial \bm{h}_{\bm{r}_j}}=0 \Rightarrow \ \boldsymbol{\Sigma}  \bm{h}_{\bm{r}_j} = \sqrt{w_{2,j}}\ \bm{V}^\top\psi(\bm{z}_{j}),
    \end{array}\label{eq:SC}
\end{equation}\end{small}
from which we then eliminate the primal variables $\bm{U}$ and $\bm{V}$. This yields:

\begin{small}\begin{equation}
    \left[\begin{array}{cc}
        \bm{0} & \bm{W}_1^{1/2} \bm{S} \bm{W}_2^{1/2} \\
         \bm{W}_2^{1/2} \bm{S}^\top \bm{W}_1^{1/2} & \bm{0}
    \end{array}\right]
    \left[\begin{array}{c}
        \bm{\bm{H}_e} \\
         \bm{\bm{H}_r}
    \end{array}\right]
    =
        \left[\begin{array}{c}
        \bm{\bm{H}_e} \\
         \bm{\bm{H}_r}
    \end{array}\right]
    \boldsymbol{\Sigma},\label{eq:dual wKSVD}
\end{equation}\end{small}

where $\bm{0}$ is an all-zeros matrix. The above eigenvalue problem is equivalent with (\ref{eq: SVD}), and the stationarity conditions (\ref{eq:SC}) provide the relationships between primal and dual variables, which concludes the proof.
\qed
\end{proof}
\begin{figure*}[t]
  \centering 
  \resizebox{1\textwidth}{!}{
 \begin{tikzpicture}[
PR/.style={rectangle, draw=red!60, fill=red!5, very thick, minimum size=15mm},
DU/.style={rectangle, draw=blue!60, fill=blue!5, very thick, minimum size=15mm},]
\node[PR]    (Input)                              {$\mathcal{G}(\bm{X}, \bm{A})$};
\node[PR]    (Feature)      [right= 4 of Input] {$\begin{array}{c}
     \bm{\phi}_v  \\
    \bm{\psi}_v 
\end{array}$};

\node (mid) [right=2.0 of Feature] {};
\node [above=1.6 of mid] {$\text{MLP}_{rec}(\bm{Ue}_v || \bm{Ve}_v;\bm{\theta}_{\rm rec})$};
\node [below=1.65 of mid] {$\sigma(\bm{e}_u^\top \bm{U}^\top\bm{V} \bm{r}_v)$};
\node[PR]    (Latent)       [right= 4 of Feature] {$\begin{array}{c}
     \bm{e}_v  \\
    \bm{r}_v 
\end{array}$};
\node[DU] (S) [below=2.3 of Feature] {$\bm{S}=\bm{\Phi\Psi}^\top$};
\node[DU] (h) [below=2.3 of Latent] {$\begin{array}{c}
     \bm{h}_{\bm{e}_v}  \\
\bm{h}_{\bm{r}_v} 
\end{array}$};
\draw[->, very thick] (Input.east)  to node[below] {$\text{MLP}_{\psi}(\bm{x}_v || {\rm PE}_v;\bm{\theta}_\psi)$} (Feature.west);
\draw[->, very thick] (Feature.east)  to node[below] {$\bm{V}$} (Latent.west);
\draw[->, very thick] (Input.east)  to node[above] {$\text{MLP}_{\phi}(\bm{x}_v || {\rm PE}_v;\bm{\theta}_\phi)$} (Feature.west);
\draw[->, very thick] (Feature.east)  to node[above] {$\bm{U}$} (Latent.west);
\draw[->, thick, dashed] (Latent.north) .. controls  +(up:15mm) and +(up:15mm)   .. (Input.north);
\draw[->, thick, dashed] (Latent.south) .. controls  +(down:15mm) and +(down:15mm)   .. (Input.south);
\draw[->, very thick] (S.east)  to node[above] {$\text{SVD}(\bm{D}_1^{-1/2}\bm{S}\bm{D}_2^{-1/2})$} (h.west);
\draw[<->, thick, purple] (Feature.south) to node[left] {$\begin{array}{c}
     \text{Primal}  \\
    \text{Dual} 
\end{array}$} (S.north);
\draw[<->, thick, purple] (Latent.south) to node[left] {$\begin{array}{c}
     \text{Primal}  \\
    \text{Dual} 
\end{array}$} (h.north);
\draw[<->, thick, purple] (Feature.south) to (S.north);
\end{tikzpicture}
}
 \caption{\textbf{The HeNCler model}. HeNCler operates in the primal setting (top of the figure in red) and uses a double multilayer perceptron (MLP) to map node representations into a feature space. The obtained representations $\bm{\phi}_v$ and $\bm{\psi}_v$ are then projected into a latent space, yielding $\bm{e}_v$ and $\bm{r}_v$ respectively. The wKSVD loss ensures that these latent representations correspond to the dual equivalent (bottom of the figure in blue) i.e., a biclustering of the asymmetric similarity graph defined by $\bm{S}$.
  The node and edge reconstructions (dashed arrows) aid in the feature map learning.} \label{fig:model}
\end{figure*}

We have thus shown the connection between the primal (\ref{eq:primal_optimization}) and dual formulation (\ref{eq: SVD}). Similarly to the KSVD framework, the wKSVD framework can be used for learning with asymmetric kernel functions and/or rectangular data sources. The spectral biclustering problem can now easily be obtained by choosing the weights $w_{1,i}$ and $w_{2,j}$ appropriately.
\begin{corollary}\label{cor: KSBC}
Given Proposition \ref{prop: wKSVD duality}, and by choosing $\bm{W}_1$ and  $\bm{W}_2$ to equal $\bm{D}^{-1/2}_{1}$ and  $\bm{D}^{-1/2}_{2}$, where $D_{1,ii}=\sum_j S_{ij}$ and  $D_{2,jj}=\sum_i S_{ij}$, we obtain the random walk interpretation $\bm{D}_1^{-1/2} \bm{S} \bm{D}_2^{-1/2} = \bm{H}_e \bm{\Sigma} \bm{H}_r^\top$ of the spectral graph bipartitioning problem for the bipartite graph $\mathcal{S}=(\bm{\Phi},\bm{\Psi},\bm{S})$.
\end{corollary}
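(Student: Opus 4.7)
The plan is to obtain the corollary as a direct specialization of Proposition~\ref{prop: wKSVD duality}. First, I would substitute the prescribed weighting matrices into equation~\eqref{eq: SVD}: choosing $\mathbf{W}_1$ and $\mathbf{W}_2$ so that $\mathbf{W}_1^{1/2} = \mathbf{D}_1^{-1/2}$ and $\mathbf{W}_2^{1/2} = \mathbf{D}_2^{-1/2}$ converts the left-hand side $\mathbf{W}_1^{1/2}\mathbf{S}\mathbf{W}_2^{1/2}$ into $\mathbf{D}_1^{-1/2}\mathbf{S}\mathbf{D}_2^{-1/2}$, which is precisely the normalized bipartite similarity matrix $\mathbf{S}_n$ appearing in Dhillon's spectral co-clustering formulation~\cite{dhillon_co-clustering_2001}. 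Existence of $\mathbf{D}_1^{-1/2}$ and $\mathbf{D}_2^{-1/2}$ is guaranteed by the standing assumption that the weights are strictly positive, which in turn requires the row and column sums of $\mathbf{S}$ to be positive; I would briefly justify this via the feature maps defining $\mathbf{S} = \bm{\Phi}\bm{\Psi}^\top$.

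The second step is to connect the resulting SVD to the random walk view of spectral bipartitioning. For a bipartite graph the natural Markov chain alternates between $\mathcal{I}$ and $\mathcal{J}$ with one-step transition matrices $\mathbf{D}_1^{-1}\mathbf{S}$ and $\mathbf{D}_2^{-1}\mathbf{S}^\top$. A short argument using $\mathbf{D}_1^{-1}\mathbf{S} = \mathbf{D}_1^{-1/2}(\mathbf{D}_1^{-1/2}\mathbf{S}\mathbf{D}_2^{-1/2})\mathbf{D}_2^{1/2}$ then shows that the left and right singular vectors of $\mathbf{D}_1^{-1/2}\mathbf{S}\mathbf{D}_2^{-1/2}$, after the usual rescalings by $\mathbf{D}_1^{-1/2}$ and $\mathbf{D}_2^{-1/2}$, span the dominant invariant subspaces of the two transition operators. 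Since Proposition~\ref{prop: wKSVD duality} already identifies $\mathbf{H}_e$ and $\mathbf{H}_r$ with these singular vectors, the random walk cluster assignments for $\mathcal{I}$ and $\mathcal{J}$ can be read off directly from their rows.

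The main obstacle is not technical depth but notational bookkeeping: one must be careful that the exponent of $\mathbf{W}$ appearing inside the SVD in~\eqref{eq: SVD} is matched against the exponent of $\mathbf{D}$ in the classical normalization, i.e., whether the ``square root'' in the corollary statement is taken before or after inversion of $\mathbf{D}$. Once that is settled, the claim reduces to a one-line substitution in Proposition~\ref{prop: wKSVD duality} together with a citation of the Dhillon bipartite spectral clustering equivalence.
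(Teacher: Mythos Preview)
Your proposal is correct and matches the paper's approach: the paper does not give a separate proof of the corollary, treating it as an immediate substitution into Proposition~\ref{prop: wKSVD duality} (the sentence just before the corollary says the biclustering problem ``can now easily be obtained by choosing the weights \ldots\ appropriately''). Your one-line substitution, together with the appeal to Dhillon's bipartite spectral clustering equivalence, is exactly that argument; you also correctly resolve the exponent bookkeeping by taking $\mathbf{W}_1=\mathbf{D}_1^{-1}$ so that $\mathbf{W}_1^{1/2}=\mathbf{D}_1^{-1/2}$, which is consistent with the paper's later instantiation $w_{1,v}=D_{1,vv}^{-1}$ in the \modelname loss.
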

Moreover, the wKSVD framework is more general as, on the one hand, one can use a given similarity matrix (e.g. adjacency matrix of a graph) or (asymmetric) kernel function in the dual, or, on the other hand, one can  choose to use explicitly defined (deep) feature maps in both primal or dual. 
\subsection{The HeNCler model}
HeNCler employs the wKSVD framework in a graph setting, where the dataset is a node set $\mathcal{V}$ and where the asymmetry arises from employing two different mappings that operate on the nodes given the entire graph $\mathcal{G=(\bm{X},\bm{A}}$). Our approach is visualized in Figure \ref{fig:model}.

In the preceding subsection, we showed that the primal wKSVD formulation \eqref{eq:primal_optimization} has an equivalent dual problem corresponding to the graph bipartitioning problem. This equivalence holds when $w_{1,i}$ and $w_{2,j}$ are chosen to equal the square root of the inverse of the out-degree and in-degree of a similarity graph $\mathcal{S}$ respectively. The similarity graph $\mathcal{S}$ depends on the feature mappings $\phi(\cdot)$ and $\psi(\cdot)$, which for our method not only depend on the node of interest, but also on the rest of the input graph and the learnable parameters. The mappings for a node $v$ thus become $\phi(\bm{x}_v, \mathcal{G}; \bm{\theta_\phi})$ and $\psi(\bm{x}_v, \mathcal{G}; \bm{\theta_\psi})$ and we will ease these notations to $\phi(\bm{x}_v)$ and $\psi(\bm{x}_v)$. The ability of our method to learn these feature mappings is an important aspect of our contribution, as a key motivation is that we need to learn new similarities for clustering heterophilous graphs.

HeNCler's loss function is comprised of three terms: the wKSVD-loss, a node-reconstruction loss, and an edge-reconstruction loss: 
\begin{small}\begin{equation*}
        \mathcal{L}_{\rm wKSVD}(\bm{U}, \bm{V}, \bm{\Sigma}, \bm{\theta}_\phi, \bm{\theta}_\psi) + \mathcal{L}_{\rm NodeRec}(\bm{U}, \bm{V}, \bm{\theta}_\phi, \bm{\theta}_\psi, \bm{\theta}_{\rm rec}) + \mathcal{L}_{\rm EdgeRec}(\bm{U}, \bm{V}, \bm{\theta}_\phi, \bm{\theta}_\psi),
\end{equation*}\end{small}
where the trainable parameters of the model are in the the multilayer perceptron (MLP) feature maps ($\bm{\theta}_\phi$ and $\bm{\theta}_\psi$), the MLP node decoder ($\bm{\theta}_{\rm rec}$), the $\bm{U}$ and $\bm{V}$ projection matrices, and the singular values $\bm{\Sigma}$. All these parameters are trained end-to-end and we next explain the losses in more detail.

\myparagraph{wKSVD-loss}
Instead of solving the SVD in the dual formulation, HeNCler leverages the primal formulation (\ref{eq:primal_optimization}) of the wKSVD framework for greater computational efficiency. While equation (\ref{eq:primal_optimization}) assumes that the feature maps $\phi(\cdot)$ and $\psi(\cdot)$ are fixed, HeNCler utilizes parametric functions $\phi(\cdot; \bm{\theta}_\phi)$ and $\psi(\cdot; \bm{\theta}_\psi)$, enabling it to learn new similarities between nodes. By incorporating regularization terms for these functions and defining the weighting scalars as $w_{1,v} = D_{1,vv}^{-1} = 1/\sum_u \phi(\bm{x}_v)^\top \psi(\bm{x}_u)$ and $w_{2,v} = D_{2,vv}^{-1} = 1/\sum_u \phi(\bm{x}_u)^\top \psi(\bm{x}_v)$, we derive the wKSVD-loss:
\begin{small}\begin{multline}
    \mathcal{L}_{\rm wKSVD} \triangleq
    -\sum_{v=1}^{|\mathcal{V}|} D_{1,vv}^{-1} \ \phi(\bm{x}_v)^\top{\bm{U}}\bm{\Sigma}^{-1}{\bm{U}}^\top\phi(\bm{x}_v) 
    - \sum_{v=1}^{|\mathcal{V}|} D_{2,vv}^{-1} \ \psi(\bm{x}_v)^\top{\bm{V}}\bm{\Sigma}^{-1}{\bm{V}}^\top\psi(\bm{x}_v)\\
    + \text{Tr}({\bm{U}}^\top {\bm{V}}) + \sum_{v=1}^{|\mathcal{V}|}\sqrt{D_{1,vv}^{-1} \ D_{2,vv}^{-1}} \ \phi(\bm{x}_v)^\top\psi(\bm{x}_v).
    \label{eq:J_hencler}
\end{multline}\end{small}

The primal formulation of HeNCler (\ref{eq:J_hencler}) consists of four terms with two distinct objectives. The first two terms promote the weighted variance of the learned node representations $\bm{e}$ and $\bm{r}$, encouraging informative embeddings. The third and fourth terms serve as regularizers that enforce asymmetry by penalizing the similarity between $\bm{U}$ and $\bm{V}$, and between $\phi(\bm{x}_v)$ and $\psi(\bm{x}_v)$, respectively.

For the two feature maps $\phi(\cdot)$ and $\psi(\cdot)$, we employ two MLPs: $\phi(\bm{x}_v, \mathcal{G}; \bm{\theta}_\phi) \equiv \text{MLP}_{\phi}(\bm{x}_v || {\rm PE}_v; \bm{\theta}_\phi)$ and $\psi(\bm{x}_v, \mathcal{G}; \bm{\theta}_\psi) \equiv {\rm MLP}_{\psi}(\bm{x}_v || {\rm PE}_v; \bm{\theta}_\psi)$. We construct a random walks positional encoding (PE) \cite{dwivedi2022graph} to embed the network's structure and concatenate this encoding with the node attributes. The MLPs have two linear layers with a LeakyReLU activation function in between, followed by a batch normalization layer. The singular values in $\bm{\Sigma}$ are jointly learned, constrained to lie between $0$ and $1$, with the additional condition that $\text{Tr}(\bm{\Sigma}^{-\frac{1}{2}}) = 1$.

\myparagraph{Reconstruction losses}
Since the feature maps $\phi(\cdot)$ and $\psi(\cdot)$ need to be learned, an additional loss function beyond the above regularization term is required to effectively optimize the parameters of the MLPs.
As the node clustering setting is completely unsupervised, we add a decoder network and a reconstruction loss. This technique has been proven to be effective for unsupervised learning in the RKM-framework \cite{strkm}, as well as for unsupervised node representation learning \cite{sun_dual-decoder_2021}. For heterophilous graphs, we argue that it is particularly important to also reconstruct node features and not only the graph structure.

For the node reconstruction, we first project the $\bm{e}$ and $\bm{r}$ variables back to feature space, concatenate these and then map to input space with another MLP. This MLP has also two layers and a leaky ReLU activation function. The hidden layer size is set to the average of the latent dimension and input dimension.
With the mean-squared-error as the associated loss, this gives:
\begin{small}
\begin{equation*}
       \mathcal{L}_{\rm NodeRec} = \frac{1}{|\mathcal{V}|}\sum_{v \in \mathcal{V}}  ||{\rm MLP_{rec}}(\bm{U} \bm{e}_v || \bm{V} \bm{r}_v; \bm{\theta}_{\rm rec}) - \bm{x}_v||^2. 
\end{equation*}
\end{small}
To reconstruct edges, we use a simple dot-product decoder $\sigma(\bm{e}_u^\top \bm{U}^\top\bm{V} \bm{r}_v)$ where $\sigma$ is the sigmoid function. By using the $\bm{e}$ representation for source nodes and $\bm{r}$ for target nodes, this reconstruction is asymmetric and can reconstruct directed graphs. We use a binary cross-entropy loss:  
\begin{small}
$$
        \mathcal{L}_{\rm EdgeRec} =  \frac{1}{|\mathcal{U}|}\sum_{(u,v) \in \mathcal{U}} \text{BCE}(\sigma(\bm{e}_u^\top \bm{U}^\top\bm{V} \bm{r}_v), \mathcal{E}_{uv}),
$$
\end{small}
where $\mathcal{U}$ is a node-tuple set, resampled every epoch, containing $2|\mathcal{V}|$ positive edges from $\mathcal{E}$ and $2|\mathcal{V}|$ negative edges from $\mathcal{E}^C$, and $\mathcal{E}_{uv} \in \{0,1\}$ indicates whether an edge $(u,v)$ exist: $(u,v)\in\mathcal{E}$.

\myparagraph{Constraints and cluster assignments} The batch normalization in the MLPs keeps the wKSVD-loss bounded and the constraints on the singular values is enforced by a softmax function. Cluster assignments are obtained by KMeans clustering on the concatenation of learned $\bm{e}$ and $\bm{r}$ node representations.

{HeNCler} jointly learns the wKSVD projection matrices, ${\bm{U}}$ and ${\bm{V}}$, along with the feature map parameters, $\bm{\theta_\phi}$ and $\bm{\theta_\psi}$. The wKSVD loss improves the cluster-ability of the learned similarity graph, ensuring that $\bm{e}$ and $\bm{r}$ function as spectral biclustering embeddings. The two distinct feature maps enable asymmetric learning, effectively capturing potential asymmetric relationships in the data, while the reconstruction losses ensure robust and meaningful representation learning.
\section{Experiments}\label{sec:exp}
\myparagraph{Datasets} We assess the performance of HeNCler on heterophilous attributed graphs that are available in literature. 
We use Texas, Cornell, and Wisconsin \cite{geom_gcn_2020}, which are directed webpage networks where edges encode hyperlinks between pages.
Next, we use Chameleon and Squirrel \cite{data_wiki}, which are undirected Wikipedia webpage networks where edges encode mutual links.  
We further evaluate our model on three undirected graphs: Roman-Empire, Minesweeper, and Tolokers \cite{platonov2023a}. These datasets represent, respectively, a graph-structured Wikipedia article, a grid graph inspired by the Minesweeper game, and a crowdsourcing interaction network. More details are provided in the supplementary materials.

\myparagraph{Model selection and metrics} Model selection in this unsupervised setting is inherently challenging, as the most appropriate evaluation metric depends on the downstream task. Consequently, model selection is beyond the scope of this paper. Instead, we evaluate our model in a task-agnostic manner and ensure a fair comparison with the baselines. We fix the hyperparameter configuration of the models across all datasets. We train for a fixed number of epochs and keep track of the evaluation metrics to report the best observed result. We repeat the training process 10 times and report average best results with standard deviations. 
We report the normalized mutual information (NMI) and pairwise F1-scores, based on the class labels. 

\myparagraph{Baselines and hyperparameters} We compare our model against several methods, including a simple KMeans based on node attributes, adjacency partitioning-based approaches such as MinCutPool \cite{bianchi_spectral_2020} and DMoN \cite{tsitsulin_graph_2023}, as well as S$^3$GC \cite{devvrit_s3gc_2022}, MUSE \cite{MUSE_Yuan_2023}, and HoLe \cite{gu2023homophily}, which represent the current state-of-the-art in homophilous and heterophilous node clustering.
For HeNCler, we fix the hyperparameters as follows: MLP hidden dimensions $256$, output dimensions $128$, latent dimension $s = 2 \times \#\text{classes}$, learning rate $0.01$, and epochs $300$.
For the baselines, we used their code implementations and the default hyperparameter settings as proposed by the authors. The number of clusters to infer is set to the number of classes for all methods.

\myparagraph{Experimental results} Table \ref{tab:hetero_exp} presents the experimental results for heterophilous graphs. HeNCler consistently demonstrates superior performance, significantly outperforming the baselines, especially on directed graphs. For undirected graphs, HeNCler also shows strong results, achieving the best performance in 5 out of 10 cases. These results highlight HeNCler's versatility and effectiveness in handling heterophilous graph structures.
\begin{table}[]
\caption{Experimental results on heterophilous graphs. We report NMI and F1 scores for 10 runs (mean $\pm$ standard deviation), where higher values indicate better performance. The best results for each metric are highlighted in bold.  OOM indicates an out-of-memory error on a 16GB GPU.}
    \centering
    \begin{sc}
    \begin{scriptsize}
    \begin{tabular}{lcccccccc}
    \toprule
    & & \multicolumn{6}{c}{Baselines} & Ours \\ 
    \cmidrule(l){3-8} \cmidrule(l){9-9}
    \multicolumn{2}{l}{Dataset} & KMeans & MinCutPool & DMoN & S$^3$GC & MUSE & HoLe & HeNCler \\
    \midrule
    \midrule
    tex & NMI & 4.97\tiny{$\pm$}1.00 & 11.60\tiny{$\pm$2.19} & 9.06\tiny{$\pm$2.11} & 11.56\tiny{$\pm$1.46} & 39.23\tiny{$\pm$4.91}& 7.51\tiny{$\pm$0.37} & \textbf{43.65}\tiny{$\pm$2.52}\\
     & F1 & 59.27\tiny{$\pm$}0.83 & 55.26\tiny{$\pm$0.56} & 47.76\tiny{$\pm$4.79} & 43.69\tiny{$\pm$2.74} & 65.96\tiny{$\pm$3.52}& 35.30\tiny{$\pm$0.44} & \textbf{71.39}\tiny{$\pm$2.16}\\
    \midrule
     corn & NMI & 5.42\tiny{$\pm$}2.04 & 17.04\tiny{$\pm$1.61} & 12.49\tiny{$\pm$2.51} & 14.48\tiny{$\pm$1.79} & 38.99\tiny{$\pm$2.73}& 14.23\tiny{$\pm$0.17} & \textbf{41.52}\tiny{$\pm$4.35}\\
     & F1 & 52.97\tiny{$\pm$}0.24 & 51.21\tiny{$\pm$5.06}  & 43.83\tiny{$\pm$6.23} & 33.13\tiny{$\pm$0.83} & 60.58\tiny{$\pm$3.61}& 34.64\tiny{$\pm$0.56} & \textbf{63.40}\tiny{$\pm$3.67}\\
    \midrule
     wis & NMI & 6.84\tiny{$\pm$}4.39 & 13.38\tiny{$\pm$2.36} & 12.56\tiny{$\pm$1.23} & 13.07\tiny{$\pm$0.61} & 39.71\tiny{$\pm$2.22}& 11.89\tiny{$\pm$0.42} & \textbf{47.13}\tiny{$\pm$1.76}\\
     & F1 & 56.16\tiny{$\pm$}0.58 & 55.63\tiny{$\pm$2.96} & 45.72\tiny{$\pm$7.85} & 31.71\tiny{$\pm$2.25} & 58.94\tiny{$\pm$3.09}& 37.05\tiny{$\pm$0.02} & \textbf{68.30}\tiny{$\pm$2.17}\\
    \midrule
     cha & NMI & 0.44\tiny{$\pm$}0.11 & 11.88\tiny{$\pm$1.99} & 12.87\tiny{$\pm$1.86} & 15.83\tiny{$\pm$0.26} & 23.06\tiny{$\pm$0.28}& 8.76\tiny{$\pm$0.15}& \textbf{23.89}\tiny{$\pm$0.84} \\
     & F1 & \textbf{53.23}\tiny{$\pm$}0.07 & {50.40}\tiny{$\pm$5.65} & {45.05}\tiny{$\pm$4.30} & 36.51\tiny{$\pm$0.24} & 52.10\tiny{$\pm$0.48}& 30.61\tiny{$\pm$0.09}& 44.14\tiny{$\pm$1.83}\\
    \midrule
     squi & NMI & 1.40\tiny{$\pm$}2.12 & 6.35\tiny{$\pm$0.32} & 3.08\tiny{$\pm$0.38} & 3.83\tiny{$\pm$0.11} & 8.30\tiny{$\pm$0.23}& 4.99\tiny{$\pm$0.09}& \textbf{9.67}\tiny{$\pm$0.13}\\
    & F1 & 54.05\tiny{$\pm$}2.72 & \textbf{55.26}\tiny{$\pm$0.57} & 49.21\tiny{$\pm$2.74} & 35.08\tiny{$\pm$0.18} & 50.07\tiny{$\pm$5.99}& 28.71\tiny{$\pm$0.36}& 36.51\tiny{$\pm$2.39} \\
    \midrule
     rom & NMI & 35.20\tiny{$\pm$}1.79 & 9.97\tiny{$\pm$2.02} & 13.14\tiny{$\pm$0.53} & 14.48\tiny{$\pm$0.21} & \textbf{40.50}\tiny{$\pm$0.73}& \multirow{2}{*}{OOM} & 36.99\tiny{$\pm$0.61}\\
     & F1 & 37.17\tiny{$\pm$}2.12 & \textbf{42.19}\tiny{$\pm$0.26} & 22.69\tiny{$\pm$3.91} & 17.76\tiny{$\pm$0.53} & 38.34\tiny{$\pm$0.35}& & 35.43\tiny{$\pm$1.07}\\
    \midrule
     mine & NMI & 0.02\tiny{$\pm$}0.02  & {6.16}\tiny{$\pm$2.17} & \textbf{6.87}\tiny{$\pm$2.91} & {6.53}\tiny{$\pm$0.17} & 0.06\tiny{$\pm$0.01}& 6.46\tiny{$\pm$0.07}& 0.06\tiny{$\pm$0.00}\\
     & F1 & 73.63\tiny{$\pm$}3.58 & 71.76\tiny{$\pm$8.86} & 70.42\tiny{$\pm$9.47} & 48.78\tiny{$\pm$0.63} & 75.77\tiny{$\pm$2.24}& 63.01\tiny{$\pm$0.13}& \textbf{76.48}\tiny{$\pm$1.56}\\
    \midrule
     tol & NMI & 3.04\tiny{$\pm$}2.83 & 6.68\tiny{$\pm$0.98} & 6.69\tiny{$\pm$0.20} & 5.99\tiny{$\pm$0.05} & 6.67\tiny{$\pm$0.55}& 5.14\tiny{$\pm$0.06} & \textbf{6.73}\tiny{$\pm$0.59}\\
     & F1 & 65.56\tiny{$\pm$}10.49 & {72.10}\tiny{$\pm$10.38} & 67.87\tiny{$\pm$4.74} & 59.17\tiny{$\pm$0.27} & 73.56\tiny{$\pm$1.94}& 66.35\tiny{$\pm$1.12} & \textbf{73.66}\tiny{$\pm$2.10} \\
    \bottomrule
    \end{tabular}
    \end{scriptsize}
    \end{sc}
    \label{tab:hetero_exp}
\end{table}

\myparagraph{Additional experiments} are presented in the supplementary materials. We include an ablation study, some experiments on homophilous graphs, a visualization of the learned asymmetries, and a computational complexity analysis.

\myparagraph{Discussion} A key motivation behind HeNCler is to learn a new graph representation where nodes belonging to the same cluster are positioned closer together, driven by the clustering objective. This results in spectral biclustering embeddings that exhibit improved cluster-ability. As HeNCler uses KMeans to obtain cluster assignments, the comparisons between HeNCler and KMeans demonstrate that our model enhances the cluster-ability of the node representations relative to the original input features.

The asymmetry in HeNCler eliminates the undirected constraints of traditional adjacency partitioning-based models, enabling superior performance on directed graphs. Furthermore, our ablation study in the supplementary materials shows that, while most of the performance on undirected graphs stems from the graph learning component, HeNCler is able to capture and learn additional meaningful asymmetric information. This capacity to extract valuable asymmetric insights from symmetric data is a common occurrence in KSVD frameworks \cite{He2023, tao_nonlinear_2023}. Importantly, thanks to the added performance boost from asymmetry, on top of the benefits from similarity learning, HeNCler outperforms state-of-the-art models, even when applied to undirected graphs.

While our experiments focus on benchmark heterophilous graphs, the design of HeNCler makes it a promising candidate for domains characterized by relational asymmetry and weak homophily. For example, in community detection within directed social or communication networks, HeNCler’s ability to model asymmetric relationships could help capture influence flows that do not align with node similarity. In recommender systems, HeNCler could be applied to bipartite user-item graphs by using separate feature maps for each node type. This would enable the construction of a learned bipartite similarity graph that aligns user preferences with item characteristics in an unsupervised manner. Similarly, in biological networks such as gene regulatory systems, the model’s capacity to handle directed, heterophilous interactions could prove valuable.

\section{Conclusion and future work}\label{sec:concl}
We tackle three limitations of current node clustering algorithms that prevent these methods from effectively clustering nodes in heterophilous graphs: they assume homophily in their loss, they are only defined for undirected graphs and/or they lack a specific focus on clustering.

To this end, we introduce a weighted kernel SVD framework and harness its primal-dual equivalences. HeNCler relies on the dual interpretation for its theoretical motivation, while it benefits from the computational advantages of its implementation in the primal. It learns new similarities, which are asymmetric where necessary, and node embeddings resulting from the spectral biclustering interpretation of these learned similarities. As empirical evidence shows, our approach effectively eliminates the aforementioned limitations, significantly outperforming current state-of-the-art alternatives.

Future work could explore integrating contrastive learning into HeNCler, potentially combining the strengths of both paradigms. Another direction is to investigate cluster assignments in a graph pooling setting (i.e., differentiable graph coarsening), enabling end-to-end training for downstream graph-level tasks. Additionally, applying HeNCler to real-world domains such as recommender systems, biological networks, or directed social graphs—where asymmetric and heterophilous structures naturally occur—would be an important step toward validating its broader applicability.
\begin{credits}
\subsubsection{\ackname} ESAT-STADIUS has received funding from the European Research Council under the European Union's Horizon 2020 research and innovation program / ERC Advanced Grant E-DUALITY (787960). This paper reflects only the authors' views and the Union is not liable for any use that may be made of the contained information. ESAT-STADIUS received funding from the Flemish Government (AI Research Program); iBOF/23/064; KU Leuven C1 project C14/24/103. Johan Suykens and Sonny Achten are also affiliated to Leuven.AI - KU Leuven institute for AI, B-3000, Leuven, Belgium.

LIONS-EPFL was supported by Hasler Foundation Program: Hasler Responsible AI (project number 21043), by the ARO under Grant Number W911NF-24-1-0048, and by the Swiss National Science Foundation (SNSF) under grant number 200021\_205011.

\subsubsection{\discintname}
The authors have no competing interests to declare that are relevant to the content of this article.
\end{credits}
%
%
%
\bibliographystyle{splncs04}
\bibliography{references}   
\appendix
\section{Note on feature map centering}\label{app:centering}
In the wKSVD framework, we assume that the feature maps are centered. More precisely, given two arbitrary mappings $\phi(\cdot)$ and $\psi(\cdot)$, the centered mappings are obtained by subtracting the weighted mean:
\begin{gather*}
    \phi_c(\bm{x}_i) =  \phi(\bm{x}_i) - \frac{\sum_{k=1}^{n} w_{1,k} \ \phi(\bm{x}_k)}{\sum_{k=1}^{n} w_{1,k}}, \\
    \psi_c(\bm{z}_j) =  \psi(\bm{z}_j) - \frac{\sum_{l=1}^{m} w_{2,l} \ \psi(\bm{z}_l)}{\sum_{l=1}^{m} w_{2,l}}.
\end{gather*}
Although we use the primal formulation in this paper, we next show how to obtain this centering in the dual for the sake of completeness. When using a kernel function or a given similarity matrix, one has no access to the explicit mappings and has to do an equivalently centering in the dual using:
\begin{equation*}
    \bm{S}_c = \bm{M}_1 \bm{S} \bm{M}_2^\top,
\end{equation*}
where $\bm{M}_1$ and $\bm{M}_2$ are the centering matrices:
\begin{gather*}
\bm{M}_1 = \bm{I}_n - \frac{1}{\bm{1}_n^\top\bm{W}_1\bm{1}_n}\bm{1}_n\bm{1}_n^\top\bm{W}_1,\\
\bm{M}_2 = \bm{I}_m - \frac{1}{\bm{1}_m^\top\bm{W}_2\bm{1}_m}\bm{1}_m\bm{1}_m^\top\bm{W}_2,
\end{gather*}
with $\bm{I}_n$ and $\bm{1}_n$ a $n\times n$ identity matrix and a $n$-dimensional all-ones vector respectively. We omit the subscript $c$ in the paper and assume the feature maps are always centered. Note that this can easily be achieved in the implementations by using the above equations.

\section{Fenchel-Young Inequalities}

\begin{small}
\begin{gather}
        \frac{1}{2}w_{1,i} \ \bm{e}_i^\top\boldsymbol{\Sigma}^{-1}\bm{e}_i+\frac{1}{2}\bm{h}_{\bm{e}_i}^\top\boldsymbol{\Sigma}\bm{h}_{\bm{e}_i}\geq \sqrt{w_{1,i}} \ \bm{e}_i^\top\bm{h}_{\bm{e}_i},\nonumber \\
    \frac{1}{2}w_{2,j} \ \bm{r}_j^\top\boldsymbol{\Sigma}^{-1}\bm{r}_j+\frac{1}{2}\bm{h}_{\bm{r}_j}^\top\boldsymbol{\Sigma}\bm{h}_{\bm{r}_j}\geq \sqrt{w_{2,j}} \ \bm{r}_j^\top\bm{h}_{\bm{r}_j},\label{eq:FY}
\end{gather}
\end{small}
The above inequalities can be verified by writing it in quadratic form: 

\begin{small}$	\frac{1}{2}
	\begin{bmatrix}
		\bm{e}^{\top}_{i} & \bm{h}^{\top}_{e_i} \\
	\end{bmatrix}
	\begin{bmatrix}
		w_{1,i}\bm{\Sigma}^{-1} & -\sqrt{w_{1,i}} \ \bm{I}_s          \\
		-\sqrt{w_{1,i}} \ \bm{I}_s     & \bm{\Sigma}
	\end{bmatrix}
	\begin{bmatrix}
		\bm{e}_{i} \\
		\bm{h}_{e_i}
	\end{bmatrix}
	\geq
	0, \quad \forall i
$ \end{small}
with $\bm{I}_s$ the $s$-dimensional identity matrix, which follows immediately from the Schur complement form: for a matrix $\bm{Q} =$ \begin{small}$\left[\begin{smallmatrix}
			\bm{Q}_1 & \bm{Q}_2 \\ \bm{Q}_2^{\top} & \bm{Q}_3 \end{smallmatrix}\right],$  \end{small}
one has $\bm{Q} \succeq 0$ if and only if $\bm{Q}_1 \succ 0$ and the Schur complement $\bm{Q}_3 - \bm{Q}_2^{\top} \bm{Q}_1^{-1} \bm{Q}_2 \succeq 0$.

\section{Additional Experiments}\label{app:additional_results}

\myparagraph{Homophilous experiments} Although our work primarily focuses on heterophilous graphs, we further evaluate our model on homophilous citation networks Cora, Citeseer, and PubMed \cite{data_pubmed2, data_pubmed1}. Relevant dataset statistics can be consulted in Table \ref{tab:homo_datasets}. We employ the same experimental setup as for the heterophilous datasets and report the experimental results in Table \ref{tab:homo_exp}. 
Although HoLe achieves the best overall performance--albeit at a higher computational cost--due to its emphasis on homophily enhancement, HeNCler surpasses adjacency partitioning methods such as MinCutPool and DMoN. Furthermore, HeNCler demonstrates competitive performance with MUSE, the state-of-the-art in heterophilous node clustering, further validating its robustness across diverse graph structures.
\begin{table*}[ht]
\caption{Dataset statistics.}
\begin{center}
\begin{tabular}{lcccccc}
\toprule
Dataset & short & \# Nodes & \# Edges & \# Classes & Directed & $\mathcal{H}(\mathcal{G})$\\
\midrule
Cora & cora & 2,708 & 5,278 & 7 & \myxmark & 0.765 \\
CiteSeer & cite & 3,327 & 4,614 & 6 & \myxmark & 0.627 \\
Pubmed & pub & 19,717 & 44,325 & 3 &\myxmark & 0.664 \\
Texas & tex & 183 & 325 & 5 & \mycheckmark & 0.000 \\
Cornell & corn & 183 & 298 & 5 & \mycheckmark & 0.150 \\
Wisconsin & wis &  251 & 515 & 5 & \mycheckmark & 0.084 \\
Chameleon & cha & 2,277 & 31,371 & 5 &\myxmark & 0.042 \\
Squirrel & squi & 5,201 & 198,353 & 5 &\myxmark & 0.031 \\
Roman-empire & rom & 22,662 & 32,927 & 18 & \myxmark & 0.021 \\
Minesweeper & mine & 10,000 & 39,402 & 2 & \myxmark & 0.009 \\
Tolokers & tol & 11,758 & 519,000 & 2 & \myxmark & 0.180 \\
\bottomrule
\end{tabular}
\end{center}
\label{tab:homo_datasets}
\end{table*} 
\begin{table}[]
\caption{Experimental results on homophilous graphs. We report NMI and F1 scores for 10 runs (mean $\pm$ standard deviation), where higher values indicate better performance. The best results for each metric are highlighted in bold.}
    \centering
    \begin{scriptsize}
    \begin{tabular}{lcccccccc}
    \toprule
    & & \multicolumn{6}{c}{Baselines} & Ours \\ 
    \cmidrule(l){3-8} \cmidrule(l){9-9}
    \multicolumn{2}{l}{Dataset} & KMeans & MinCutPool & DMoN & S$^3$GC & MUSE & HoLe & HeNCler \\
    \midrule
    \midrule
    cora & NMI & 35.0\tiny{$\pm$3.21} & 49.0\tiny{$\pm$2.24} & 51.7\tiny{$\pm$1.63} & 53.62\tiny{$\pm$0.55} & 36.45\tiny{$\pm$2.71}& \textbf{57.74}\tiny{$\pm$0.83} & 38.81\tiny{$\pm$2.26}\\
     & F1 & 36.0\tiny{$\pm$2.12} & 47.1\tiny{$\pm$1.78} & 51.8\tiny{$\pm$2.02} & 60.12\tiny{$\pm$0.46} & 50.78\tiny{$\pm$2.79} & \textbf{73.74}\tiny{$\pm$0.44} & 47.93\tiny{$\pm$2.60}\\
    \midrule
     cite & NMI & 19.9\tiny{$\pm$2.90} & 29.5\tiny{$\pm$3.21} & 30.3\tiny{$\pm$1.09} & \textbf{43.56}\tiny{$\pm$0.65} & 39.03\tiny{$\pm$1.99}& 43.41\tiny{$\pm$0.11} & 34.83\tiny{$\pm$2.21}\\
     & F1 & 39.4\tiny{$\pm$3.07} & 47.1\tiny{$\pm$1.21} & 57.4\tiny{$\pm$3.42} & 64.12\tiny{$\pm$0.28} & 52.89\tiny{$\pm$1.68}& \textbf{68.54}\tiny{$\pm$0.08} & 48.70\tiny{$\pm$2.79}\\
    \midrule
     pub & NMI & 31.4\tiny{$\pm$2.18} & 21.4\tiny{$\pm$1.46} & 25.7\tiny{$\pm$2.46} & 31.01\tiny{$\pm$2.35} & \textbf{36.09}\tiny{$\pm$3.26}& 31.29\tiny{$\pm$0.34} & 27.26\tiny{$\pm$1.72}\\
     & F1 & 59.2\tiny{$\pm$2.32} & 44.5\tiny{$\pm$2.47} & 34.3\tiny{$\pm$2.05} & 69.12\tiny{$\pm$1.39} & 61.26\tiny{$\pm$1.50}& \textbf{71.12}\tiny{$\pm$0.13} & 51.17\tiny{$\pm$1.75}\\
    \bottomrule
    \end{tabular}
    \end{scriptsize}
    \label{tab:homo_exp}
\end{table}

\myparagraph{Ablation studies} We conduct several ablation studies, presented in Table \ref{tab:full_ablation}. The `Undirected' variant refers to a simplified, symmetric version of the model that uses a single MLP for both the $\phi(\cdot)$ and $\psi(\cdot)$ mappings, i.e., $\phi(\cdot) \equiv \psi(\cdot)$. In this version, the model loses its asymmetry. The `wKSVD only' and `Reconstr only' variations reflect models that incorporate only the wKSVD loss ($\mathcal{L}_{\rm wKSVD}$) and the reconstruction losses ($\mathcal{L}_{\rm NodeRec} + \mathcal{L}_{\rm EdgeRec}$), respectively. Interestingly, as shown in Table \ref{tab:full_ablation}, even for undirected graphs, introducing asymmetry in HeNCler enhances clustering performance. Furthermore, all loss components are shown to contribute positively to HeNCler's overall performance.

\begin{table}[]
\caption{Ablation study results. We report NMI and F1 scores for 10 runs  (mean $\pm$ standard deviation in \%) where higher is better. Best results are highlighted in bold.}
    \centering
    \begin{scriptsize}
    \begin{tabular}{lccccc}
    \toprule
    & & \multicolumn{3}{c}{ablations} & full model \\ 
    \cmidrule(l){3-5} \cmidrule(l){6-6}
    \multicolumn{2}{l}{dataset} & Undirected & Reconstr only & wKSVD only & HeNCler \\
    \midrule
    \midrule
    tex & NMI & 27.58\tiny{$\pm$4.75} & 29.54\tiny{$\pm$2.27} & 31.64\tiny{$\pm$2.14} & \textbf{43.65}\tiny{$\pm$2.52}\\
     & F1 & 65.20\tiny{$\pm$2.06} & 66.64\tiny{$\pm$1.83} & 62.83\tiny{$\pm$3.91} & \textbf{71.39}\tiny{$\pm$2.16}\\
    \midrule
     corn & NMI & 18.12\tiny{$\pm$2.57} & 27.76\tiny{$\pm$3.29} & 20.63\tiny{$\pm$5.92} & \textbf{41.52}\tiny{$\pm$4.35}\\
     & F1  & 53.69\tiny{$\pm$0.98} & 54.70\tiny{$\pm$1.88} & 47.12\tiny{$\pm$2.61} & \textbf{63.40}\tiny{$\pm$3.67}\\
    \midrule
     wis & NMI & 25.08\tiny{$\pm$3.54} & 34.65\tiny{$\pm$1.86} & 39.86\tiny{$\pm$4.63} & \textbf{47.13}\tiny{$\pm$1.76}\\
     & F1 &  57.13\tiny{$\pm$1.34} & 62.28\tiny{$\pm$1.58} & 63.60\tiny{$\pm$2.46}  & \textbf{68.30}\tiny{$\pm$2.17}\\
    \midrule
     cha & NMI & 19.91\tiny{$\pm$0.48} & 22.02\tiny{$\pm$0.25} & 22.60\tiny{$\pm$0.57} & \textbf{23.89}\tiny{$\pm$0.84} \\
     & F1 &  44.08\tiny{$\pm$1.79} & 43.42\tiny{$\pm$1.63} & 42.98\tiny{$\pm$0.37} & \textbf{44.14}\tiny{$\pm$1.83}\\
    \midrule
     squi & NMI & 9.59\tiny{$\pm$0.21} & 9.59\tiny{$\pm$0.27} & 9.56\tiny{$\pm$0.19} & \textbf{9.67}\tiny{$\pm$0.13}\\
    & F1 & \textbf{55.43}\tiny{$\pm$0.03} & 53.74\tiny{$\pm$3.77} & 36.42\tiny{$\pm$1.85} & 36.51\tiny{$\pm$2.39} \\
    \midrule
     rom & NMI & 33.17\tiny{$\pm$1.25} & \textbf{40.05}\tiny{$\pm$0.82} & 35.99\tiny{$\pm$0.95} & 36.99\tiny{$\pm$0.61}\\
     & F1 & 33.57\tiny{$\pm$2.15} & 35.16\tiny{$\pm$1.34} & 35.30\tiny{$\pm$0.97} & \textbf{35.43}\tiny{$\pm$1.07}\\
    \midrule
     mine & NMI & \textbf{0.08}\tiny{$\pm$0.02} & 0.07\tiny{$\pm$0.02} & 0.04\tiny{$\pm$0.01} & 0.06\tiny{$\pm$0.00}\\
     & F1 & 76.15\tiny{$\pm$2.25} & 76.05\tiny{$\pm$2.16} & 73.77\tiny{$\pm$3.40} & \textbf{76.48}\tiny{$\pm$1.56}\\
    \midrule
     tol & NMI & 6.33\tiny{$\pm$0.94} & 6.18\tiny{$\pm$0.67} & 4.42\tiny{$\pm$0.54} & \textbf{6.73}\tiny{$\pm$0.59}\\
     & F1 & 73.89\tiny{$\pm$4.00} & 68.60\tiny{$\pm$5.95} & 68.45\tiny{$\pm$7.57} & \textbf{73.66}\tiny{$\pm$2.10} \\
    \bottomrule
    \end{tabular}
    \end{scriptsize}
    \label{tab:full_ablation}
\end{table}

\begin{figure}
    \centering    \includegraphics[width=0.85\textwidth]{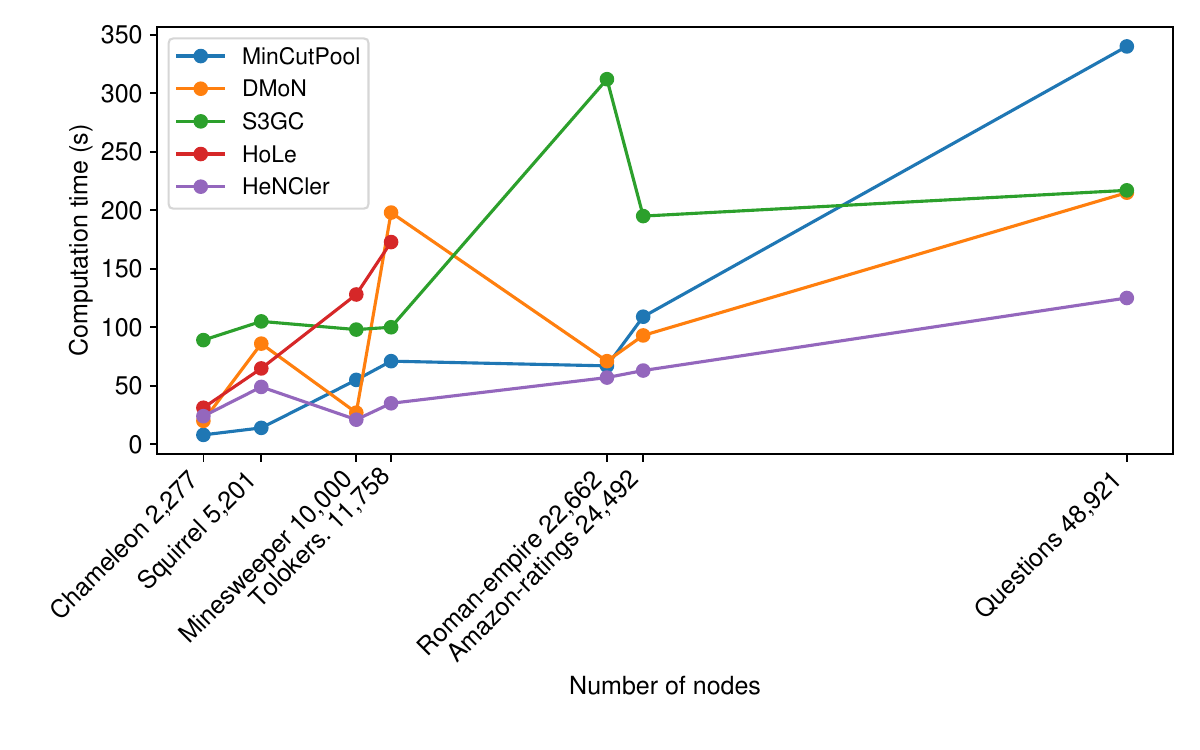}
    \caption{Fig. 1: Computation times of MinCutPool, DMoN, S$^3$GC, HoLe, and HeNCler with respect to the number of nodes in the datasets. HeNCler exhibits linear scalability and is not sensitive to the number of edges, unlike DMoN, which shows a significant spike in computation time for the Tolokers dataset due to its high edge count. For HoLe, computation times are reported only up to Tolokers, as later datasets caused out-of-memory errors.}
    \label{fig:compute}
\end{figure}
\begin{figure}[ht]
\begin{center}
\includegraphics[width=0.7\textwidth]{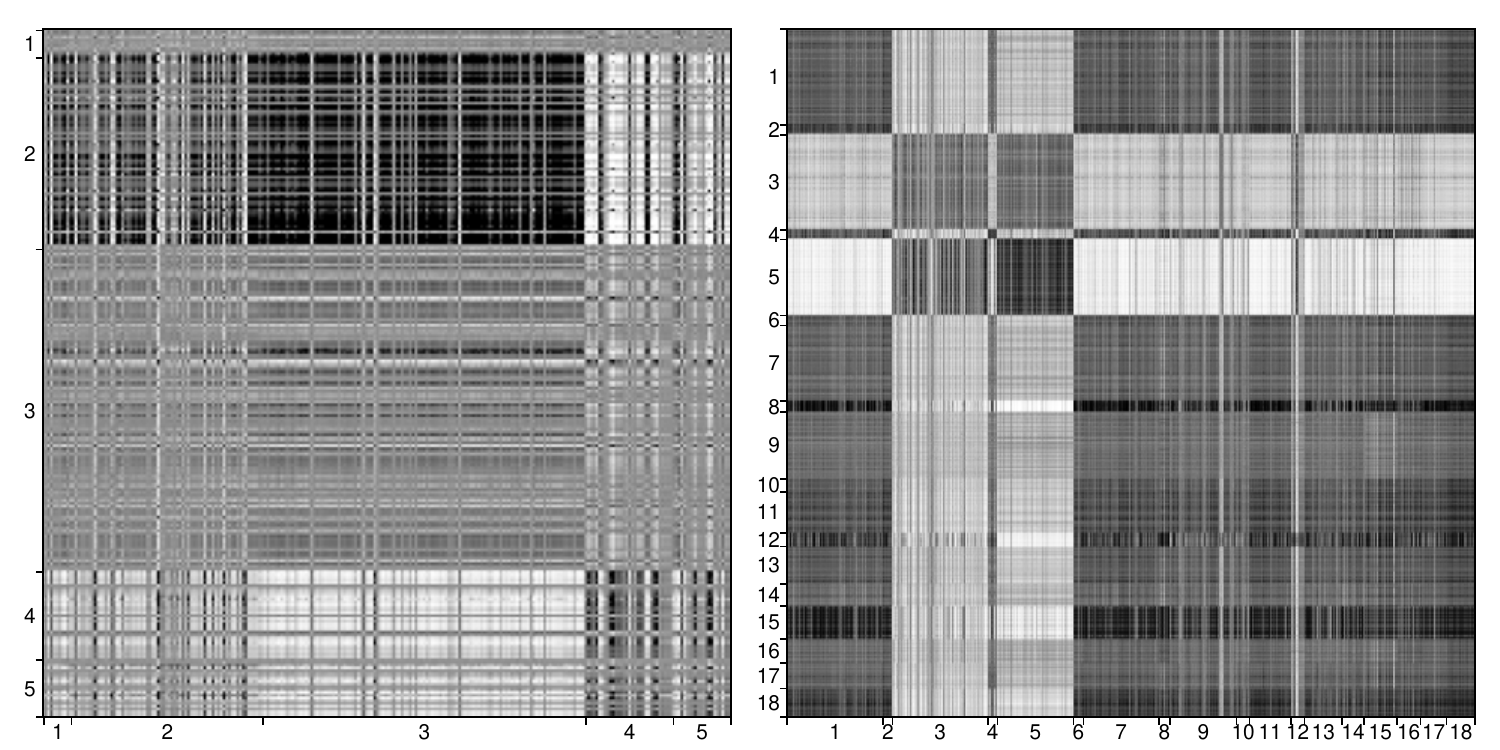} 
  \caption{The learned matrix $\bm{S}=\bm{\Phi\Psi}^\top$ for the Wisconsin (left) and Roman-empire (right) dataset. Rows and columns are grouped according to ground-truth node labels.} \label{fig:roman_heatmap}
\end{center}
\end{figure}

\section{Computational complexity}\label{app:compute}
The space and time complexity of the current implementation of HeNCler are both linear w.r.t. the number of nodes $\mathcal{O}(|\mathcal{V}|)$. Whereas MinCutPool and DMoN need all the node attributes in memory to calculate the loss w.r.t. the full adjacency matrix, HeNCler is easily adaptable to work with minibatches which reduces space complexity to the minibatch size $\mathcal{O}(|\mathcal{B}|)$. Although HeNCler relies on edge reconstruction, the edge sampling avoids quadratic complexity w.r.t. the number of nodes, and is specifically designed to scale with the number of nodes, rather than the number of edges. Assuming the graphs are sparse, we add an overview of space and time complexity w.r.t. the number of nodes and edges in Table \ref{tab:qualitative comparison}. We do not include HoLe in this comparison, as no theoretical complexities have been reported in the paper.

\begin{table*}[!t]
\caption{Qualitative comparison of HeNCler with several baselines. In the table, $|\mathcal{V}|$, $|\mathcal{B}|$, and $|\mathcal{E}|$ denote the total number of nodes, the mini-batch size, and the number of edges respectively.}
\begin{center}
\begin{small}
\begin{sc}
\begin{tabular}{lccccc}
\toprule
 & \multicolumn{4}{c}{baselines} & ours \\ 
\cmidrule(l){2-5} \cmidrule(l){6-6}
 & MinCutP. & DMoN & S$^3$GC & MUSE & HeNCler \\
\midrule
\midrule
Space complexity & $\mathcal{O}(|\mathcal{V}|^2)$ & $\mathcal{O}(|\mathcal{V}|+|\mathcal{E}|)$ & $\mathcal{O}(|\mathcal{B}|)$ & $\mathcal{O}(|\mathcal{V}|+|\mathcal{E}|)$  & $\mathcal{O}(|\mathcal{B}|)$\\
Time complexity & $\mathcal{O}(|\mathcal{V}|+|\mathcal{E}|)$ & $\mathcal{O}(|\mathcal{V}|+|\mathcal{E}|)$ & $\mathcal{O}(|\mathcal{V}|)$ & $\mathcal{O}(|\mathcal{V}|+|\mathcal{E}|)$  & $\mathcal{O}(|\mathcal{V}|)$\\
\bottomrule
\end{tabular}
\end{sc}
\end{small}
\end{center}
\label{tab:qualitative comparison}
\end{table*} 

We train MinCutPool, DMoN, and HeNCler for 300 iterations; and S$^3$GC for 30 iterations. For HoLe, we use 5 cluster-aware sparsification updates.
Experiments are conducted using a 16GB GPU and we report the computation times in Table \ref{tab:computationtimes}.
Figure \ref{fig:compute} visualizes these results w.r.t. the number of nodes in the graph, showing the linear time complexity of HeNCler and that it is insensitive to the number of edges. We conclude that HeNCler demonstrates fast computation times.

\begin{table*}[ht]
\caption{Computation times in seconds. OOM indicates an out-of-memory error on a 16GB GPU.}
\begin{center}
\begin{sc}
\begin{tabular}{lccccc}
\toprule
\multirow{2}*{Dataset} &\multicolumn{4}{c}{baselines} & ours \\ 
\cmidrule(l){2-5} \cmidrule(l){6-6}
 & MinCutP. & DMoN & S$^3$GC & HoLe & HeNCler \\
\midrule
\midrule
Chameleon       &   8 &  20 &  89 & 31 & 24 \\
Squirrel        &  14 &  86 & 105 & 65 & 49 \\
Roman-empire    &  67 &  71 & 312 & OOM & 57 \\
Amazon-rating   & 109 &  93 & 195 & OOM & 63 \\
Minesweeper     &  55 &  27 &  98 & 128 & 21 \\
Tolokers        &  71 & 198 & 100 & 173 & 35 \\
Questions       & 340 & 215 & 217 & OOM & 125  \\
\bottomrule
\end{tabular}
\end{sc}
\end{center}
\label{tab:computationtimes}
\end{table*} 
\section{Visualization of learned similarities}\label{sec:disc}
We visualize the learned similarity matrix $\bm{S}=\bm{\Phi \Psi^\top}$ for two datasets in Figure \ref{fig:roman_heatmap}.
These matrices are generally asymmetric, with the asymmetry particularly pronounced in the directed graph of the Wisconsin dataset. In contrast, the Roman-Empire dataset, which is represented by an undirected graph, exhibits less asymmetry in the learned similarity matrix. This demonstrates the adaptability of HeNCler to handle both directed and undirected graphs. Further, given the observable block structures, the learned similarities are meaningful w.r.t. to the ground truth node labels. 
Note however that our model operates in the primal setting and directly projects the learned mappings $\bm{\phi}$ and $\bm{\psi}$ to their final embeddings $\bm{e}$ and $\bm{r}$ using $\bm{U}$ and $\bm{V}$ respectively, avoiding quadratic space complexity and cubic time complexity of the SVD. This is the motivation of employing a kernel based method, and exploiting the primal-dual framework that comes with it. In fact, the matrices in Figure \ref{fig:roman_heatmap} are only constructed for the sake of this visualization.

\end{document}